\newcommand{\citep}[1]{\cite{#1}}
\newcommand{\BR}{\mathbb{R}}
\newcommand{\BL}{\mathbb{L}}
\newtheorem{proposition}{Proposition}
\newtheorem{definition}{Definition}
\newtheorem{corollary}{Corollary}
\ifcvprfinal\pagestyle{empty}\fi
\begin{document}

%%%%%%%%% TITLE
\title{Global Hypothesis Generation for 6D Object Pose Estimation }

\author{Frank Michel, Alexander Kirillov, Eric Brachmann, Alexander Krull\\ Stefan Gumhold, Bogdan Savchynskyy, Carsten Rother\\
TU Dresden\\
%{\tt\small frank.michel@tu-dresden.de}
% For a paper whose authors are all at the same institution,
% omit the following lines up until the closing ``}''.
% Additional authors and addresses can be added with ``\and'',
% just like the second author.
% To save space, use either the email address or home page, not both
%First line of institution2 address\\
%{\tt\small secondauthor@i2.org}
}

\maketitle
%\thispagestyle{empty}

%%%%%%%%% ABSTRACT
\begin{abstract}
This paper addresses the task of estimating the 6D pose of a known 3D object from a single RGB-D image. Most modern approaches solve this task in three steps: i) Compute local features; ii) Generate a pool of pose-hypotheses; iii) Select and refine a pose from the pool. This work focuses on the second step. While all existing approaches generate the hypotheses pool via local reasoning, e.g. RANSAC or Hough-voting, we are the first to show that global reasoning is beneficial at this stage. In particular, we formulate a novel fully-connected Conditional Random Field (CRF) that outputs a very small number of pose-hypotheses. Despite the potential functions of the CRF being non-Gaussian, we give a new and efficient two-step optimization procedure, with some guarantees for optimality. We utilize our global hypotheses generation procedure to produce results that exceed state-of-the-art for the challenging ``Occluded Object Dataset''.
\end{abstract}

%%%%%%%%% BODY TEXT
\section{Introduction}
The task of estimating the 6D pose of texture-less objects has gained a lot of attention in recent years. From an application perspective this is probably due to the growing interest in industrial robotics, and in various forms of augmented reality scenarios. From an academic perspective the dataset of Hinterstoisser \etal \cite{Hinterstoisser:2012} marked a milestone, since researchers started to benchmark their efforts and progress in research started to be more measurable. In this work we focus on the following task. Given an RGB-D image of a 3D scene, in which a known 3D object is present, \ie its 3D shape and appearance is known, we would like to identify the 6D pose (3D translation and 3D rotation) of that object.
\begin{figure}[t]
\begin{center}
\includegraphics[scale=0.3]{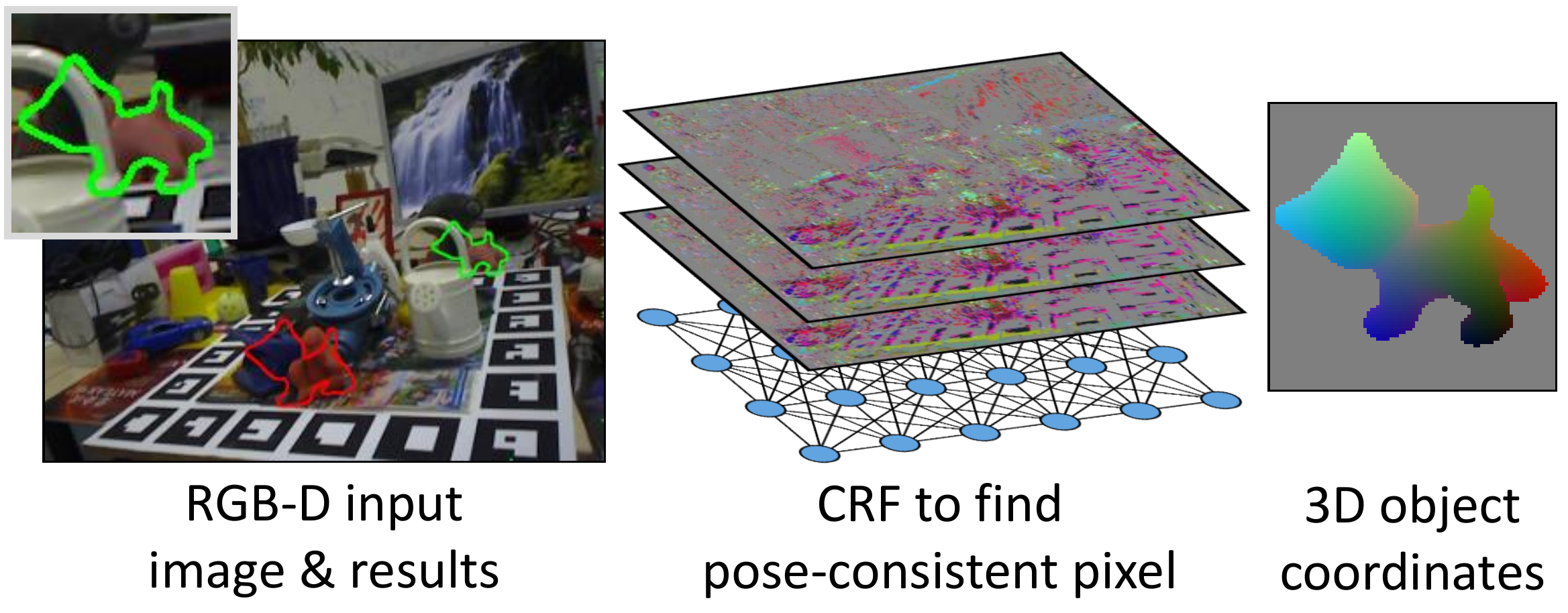}
\end{center}
   \caption{\textbf{Motivation.} Given an RGB-D input image (left) we aim at finding the 6D pose of a given object, despite it being strongly occluded (see zoom). Here our result (green) is correct, while Krull \etal \cite{Krull:2015} outputs an incorrect pose (red). The key concept of this work is to have a \textit{global}, and hence powerful, geometric check, in the beginning of the pose estimation pipeline. This is in stark contrast to \textit{local} geometric checks performed by all other methods. In a first step, a random forest predicts for each pixel a set of three possible object coordinates, \ie dense continuous part labeling of the object (middle). Given this, a fully-connected pairwise Conditional Random Field (CRF) infers globally those pixels which are consistent with the 6D object pose. We refer to those pixels as \textit{pose-consistent}. The final pose is derived from these pose-consistent pixels via an ICP-variant.}
\label{fig:teaser}
\end{figure}

\begin{table*}[]
\centering
\label{tab:categorization}
\scalebox{0.9}{
\begin{tabular}{|c|c|c|c|c|c|l|}
\hline
\textbf{Method}                                                & \begin{tabular}[c]{@{}c@{}}Intermediate\\ Representation\end{tabular}    & \begin{tabular}[c]{@{}c@{}}Hypotheses\\ Generation\end{tabular}                  & \begin{tabular}[c]{@{}c@{}}Average Number\\ of Hypotheses\end{tabular} & \begin{tabular}[c]{@{}c@{}}Hypotheses\\ Selection\end{tabular}     & \begin{tabular}[c]{@{}c@{}}Hypotheses\\ Refinement\end{tabular} & \begin{tabular}[c]{@{}l@{}}Run\\ Time\end{tabular} \\ \hline
\begin{tabular}[c]{@{}c@{}}Drost \etal \cite{Drost:2010}\\ Hinterstoisser \etal \cite{Hinterstoisser:2016}\end{tabular} & \begin{tabular}[c]{@{}c@{}}Dense Point\\ Pair Features\end{tabular}   & \begin{tabular}[c]{@{}c@{}}All local pairs\\ (large neighbourhood)\end{tabular}  & \textcolor{red}{$\sim$ $20.000$}                                     & \begin{tabular}[c]{@{}c@{}}Sub-optimal\\ search\end{tabular}       & ICP                                                             & 0.4s                                               \\ \hline
Zach \etal \cite{Zach:2015}                                                         & \begin{tabular}[c]{@{}c@{}}multiple object\\ coordinates\end{tabular} & \begin{tabular}[c]{@{}c@{}}All local triplets\\ with geometric check\end{tabular}  & \textcolor{red}{$2.000$}                                           & \begin{tabular}[c]{@{}c@{}}Optimal \wrt\\ PDA\end{tabular}         & PDA                                                             & 0.5s                                               \\ \hline
Brachmann \etal \cite{Brachmann:2014}                                                      & \begin{tabular}[c]{@{}c@{}}multiple object\\ coordinates\end{tabular} & \begin{tabular}[c]{@{}c@{}}Sampling triplets\\ with geometric check\end{tabular}   & \textcolor{red}{210}                                             & \begin{tabular}[c]{@{}c@{}}Optimal \wrt\\ Energy\end{tabular}      & \begin{tabular}[c]{@{}c@{}}ICP\\ variant\end{tabular}           & 2s                                                 \\ \hline
Krull \etal \cite{Krull:2015}                                                          & \begin{tabular}[c]{@{}c@{}}multiple object\\ coordinates\end{tabular} & \begin{tabular}[c]{@{}c@{}}Sampling triplets\\ with geometric check\end{tabular}   & \textcolor{red}{210}                                             & \begin{tabular}[c]{@{}c@{}}Optimal \wrt\\ CNN\end{tabular}         & \begin{tabular}[c]{@{}c@{}}ICP\\ variant\end{tabular}           & 10s                                                \\ \hline
Our                                                            & \begin{tabular}[c]{@{}c@{}}multiple object\\ coordinates\end{tabular} & \begin{tabular}[c]{@{}c@{}}Fully-connected CRF\\ with geometric check\end{tabular} & \textcolor{red}{0-10}                                            & \begin{tabular}[c]{@{}c@{}}Optimal \wrt\\ ICP variant\end{tabular} & \begin{tabular}[c]{@{}c@{}}ICP\\ variant\end{tabular}           & 1-3s                                               \\ \hline
\end{tabular}
}
\caption{A broad categorization of six different 6D object pose estimation methods with respect to four different computational steps: (a) Intermediate representation, (b) Hypotheses generation, (c) Hypotheses selection, (d) Hypotheses refinement, (e) Runtime. The key difference between the methods is marked in red: the number of generated hypotheses. We clearly generate least amount of hypotheses. For this we run an CRF-based hypotheses generation method which is more time-consuming and complex than in other approaches. Please note that our overall runtime is competitive. On the other hand, since we have fewer hypotheses, we can afford a more expensive ICP-like procedure to optimally select the best hypothesis. We show that we achieve results which are superior to all other methods on the challenging ``Occluded Object Dataset''. (Note PDA stands for ``projective data association''.)}
\end{table*}

Let us consider an exhaustive-search approach to this problem. We generate all possible 6D pose hypotheses, and for each hypothesis we run a robust ICP algorithm~\cite{Besl:1992} to estimate a robust geometric fit of the 3D model to the underlying data. The final ICP score can then be used as the objective function to select the final pose. This approach has two great advantages: (i) It considers all hypotheses; (ii) It uses a geometric error to prune all incorrect hypotheses.  Obviously, this approach is infeasible from a computational perspective, hence most approaches generate first a pool of hypotheses and use a geometrically motivated scoring function to select the right pose, which can be refined with robust ICP if necessary. Table \ref{tab:categorization} lists five recent works with different strategies for ``hypotheses generation'' and ``geometric selection''. The first work by Drost \etal \cite{Drost:2010}, and recently extended by Hinterstoisser \etal \cite{Hinterstoisser:2016}, has no geometric selection process, and generates a very large number of hypotheses. The pool of hypotheses is put into a Hough-space and the peak of the distribution is found as the final pose. Despite its simplicity, the method achieves very good results, especially on the challenging ``Occluded Object dataset''\footnote{http://cvlab-dresden.de/iccv2015-occlusion-challenge/}, \ie where objects are subject to strong occlusions. We conjecture that the main reason for its success is that it generates hypotheses from all local neighborhoods in the image. Especially for objects that are subject to strong occlusions, it is important to predict poses from as local information as possible. The other three approaches \cite{Brachmann:2014,Krull:2015,Zach:2015} use triplets, and are all similar in spirit. In a first step they compute for every pixel one, or more, so-called object coordinates, a 3D continuous part-label on the given object (see Fig.\ref{fig:teaser} right). Then they collect locally triplets of points, in \cite{Zach:2015} these are all local triplets and in \cite{Brachmann:2014,Krull:2015} they are randomly sampled with RANSAC. For each triplet of object coordinates they first perform a geometry consistency check (see \cite{Brachmann:2014,Krull:2015,Zach:2015} for details\footnote{For instance, the geometric check of \cite{Brachmann:2014,Krull:2015} determines whether there exists a rigid body transformation of the triplets of 3D points, given by the depth image, for the triplet of 3D points from the object coordinates.}), and if successful, they compute the 6D object pose, using the Kabsch algorithm. Due to the geometric check it is notable that the amount of generated hypotheses is substantially less for these three approaches \cite{Brachmann:2014,Krull:2015,Zach:2015} than for the previously discussed \cite{Drost:2010,Hinterstoisser:2016}. Due to this reason, the methods \cite{Brachmann:2014,Krull:2015,Zach:2015} can run more elaborate hypotheses selection procedures to find the optimal hypothesis. In \cite{Zach:2015} this is done via a so-called robust ``projective data association'' procedure, in \cite{Brachmann:2014} via a hand-crafted, robust energy, and in \cite{Krull:2015} via a CNN that scores every hypothesis. Our work is along the same direction as \cite{Brachmann:2014,Krull:2015,Zach:2015}, but goes one step forward. We presents a novel, and more powerful, geometric check, which results in even fewer hypotheses (between 0-10). For this reason we can also afford to run a complex ICP-like scoring function for selecting the best hypothesis. Since we achieve results that are better than state-of-the-art on the challenging occlusion dataset, our pool of hypotheses has at least the same quality as the larger hypotheses pool of all other methods.  Our geometric check works roughly as follows. For each pair of object coordinates a geometry-consistency measure is computed. We combine a large number of pairs into a fully-connected Conditional Random Field (CRF) model. Hence, in contrast to existing work we perform a \textit{global} geometry check and not a \textit{local} one. It is important to note that despite having a complex CRF, we are able to have a runtime which is competitive with other methods, even considerably faster than \cite{Krull:2015}. As a side note, we also achieve these state-of-the-art results with little amount of learning, in contrast to e.g. \cite{Krull:2015}.
\textbf{Our contributions} are in short:
\begin{itemize}
\item We are the first to propose a novel, \textit{global} geometry check for the task of 6D object pose estimation. For this we utilize a fully-connected Conditional Random Field (CRF) model, which we solve efficiently, although its pairwise costs are non-Gaussian and hence efficient approximation techniques like~\cite{koltun2011efficient} cannot be utilized. 
\item We give a new theoretical result which is used to compute our solutions. We show that for binary energy minimization problems, a (partial) optimal solution on a subgraph of the graphical model can be used to find a (partial) optimal solution on the whole graphical model. Proper construction of such subgraphs allows to drastically reduce the computational complexity of our method.
\item Our approach achieves state-of-the-art results on the challenging occlusion dataset, in reasonable run-time (1-3s).
\end{itemize}

\section{Related Work}
The topic of object detection and pose estimation has been widely researched in the past decade. In the brief review below, we focus only on recent works and split them into three categories. We will omit the methods \cite{Brachmann:2014,Krull:2015, Drost:2010, Zach:2015, Hinterstoisser:2016} since they were already discussed in the previous section.

\noindent {\bf Sampling-Based Methods.~}
Sparse feature based methods (\cite{Gordon:2006,Martinez:2010}) have shown good results for accurate pose estimation. They extract points of interest and match them based on a RANSAC sampling scheme. With the shift of the application scenario into robotics their popularity declined since they rely on texture.  Shotton \etal \cite{Shotton:2013:SCORF} addressed the task of camera re-localization by introducing the concept of scene coordinates. They learn a mapping from camera coordinates to world coordinates and generate camera pose hypotheses by random sampling. Most recently Phillips \etal \cite{Phillips:2016} presented a method for pose estimation and shape recovery of transparent objects where a random forest is trained to detect transparent object contours. Those edge responses are clustered and random sampling is employed to find the axis of revolution of the object. Instead of randomly selecting individual pixels we will use the entirety of the image to find pose hypotheses.

\noindent {\bf Non-Sampling-Based Methods.~}
An alternative to random sampling of pose hypotheses are Hough-voting based methods where all pixels cast a vote into a quantized prediction space (\eg 2D object center and scale). The cell with the majority of votes is taken as the winner. \cite{Gall:2011, Sun:2010} used a Hough-voting-scheme for 2D object detection and coarse pose estimation. Tejani \etal \cite{Tejani:2014} proposed an iterative latent-class Hough-voting-scheme for object classification and 3D pose estimation with RGB-D data as input. Template based methods \cite{Hinterstoisser:2012, Hinterstoisser:2012:PAMI, Huttenlocher:1993} have also been applied to the task of pose estimation. To find the best match the template is scanned across the image and a distance metric is computed at each position. Those methods are harmed by clutter and occlusion which disqualifies them to be applied to our scenario. In our approach each pixel is processed, but instead of them voting individually we find pose-consistent pixel-sets by global reasoning.

\noindent {\bf Pose Estimation using Graphical Models.~}
In an older piece of work the pose of object categories was found in images either in 2D~\cite{Winn:2006} or in 3D~\cite{Hoiem:2007}. They also use the key concept of discretized object coordinates for object detection and pose estimation. The MRF-inference stage for finding pose-consistent pixels is closely related to ours. Foreground pixels are accepted when the layout consistency constraint (where layout consistency means that neighboring pixels should belong to the same part) is satisfied. However since the shape of the object is unknown, the pairwise terms are not as strong as in our case. The closest related work to our is Bergholdt \etal \cite{Bergtholdt:2009}. They use the same strategy of discriminatively modeling the local appearance of object parts and globally inferring the geometric connections between them. To detect and find the pose of articulated objects (faces, human spines, human poses) they extract feature points locally and combine them in a probabilistic, fully-connected, graphical model. However they rely on a exact solution to the problem while a partial optimal solution is sufficient in our case. We therefore employ a different approach to solve the task.

\section{Method - Overview}
Before we describe our work in detail, we will introduce the task of 6D pose estimation formally and provide a high-level overview of our method. The objective is to find the 6D pose $\mathbf{H}_c = [R_c|t_c]$ of object $c$, with $R_c$ ($3 \times 3$ matrix) describing a rotation around the object center and $t_c$ ($3 \times 1$ vector) representing the position of the object in camera space. The pose $\mathbf{H}_c$ transforms each point in object coordinate space $\mathbf{y} \in \mathcal{Y} \subseteq \mathbb{R}^3$ into a point in camera space $\mathbf{x} \in \mathcal{X} \subseteq \mathbb{R}^3$.
\begin{figure*}
\begin{center}
\includegraphics[scale=0.45]{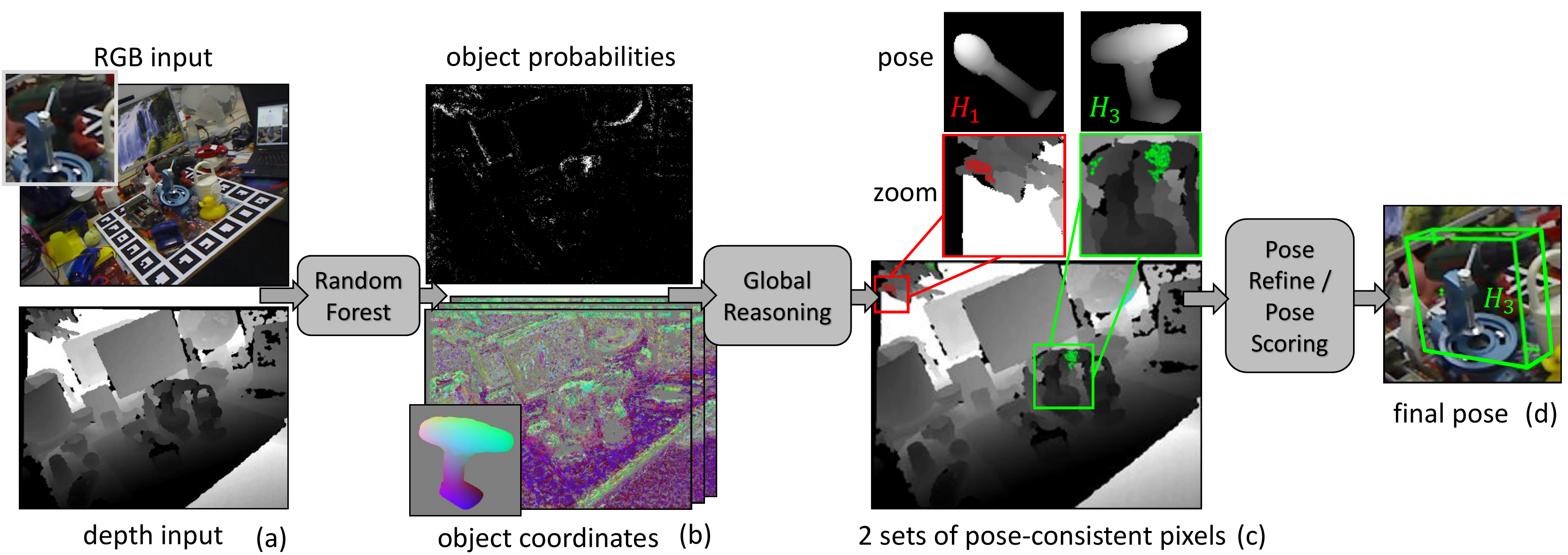}
\end{center}
   \caption{\textbf{Our pipeline:} Given an RGB-D image (a) a random forest provides two predictions: object probabilities and object coordinates (b). In a second stage our novel, fully-connected CRF infers pose-consistent pixel-sets (see zoom) (c). In the last stage, pose hypotheses given by pose-consisent pixels of the CRF are refined and scored by an ICP-variant. The pose with the lowest score is given as output (d).}
\label{fig:method-overview}
\end{figure*}

Our algorithm consists of three stages (see Fig. \ref{fig:method-overview}). In the first stage (Sec. \ref{sec:met-rf}) we densely predict object probabilities and object coordinates using a random forest. Instead of randomly sampling pose hypotheses as \eg in \cite{Brachmann:2014} we use a graphical model to globally reason about hypotheses inliers. This second stage is described in Section \ref{sec:met-inf} roughly and in Section~\ref{sec:method-gm} in detail. In the final stage (Sec. \ref{sec:met-icp}) we refine and rank our pose hypotheses to determine the best estimate.
\subsection{Random Forest}
\label{sec:met-rf}
We use the random forests from Brachmann \etal \cite{Brachmann:2014}\footnote{We kindly thank the authors for providing them}. Each tree $T$ of the forest $\mathcal{T}$ predicts for each pixel an object probability and an object coordinate. As mentioned above, an object coordinate corresponds to a 3D point on the surface of the object. In our case we have $T=3$. As in \cite{Brachmann:2014} the object probabilities from multiple trees that are combined to one value using Bayes rule. This means that for a pixel $i$ and object $c$ we have the object probability $p_c(i)$. The object probabilities can be seen as a soft segmentation mask.

\subsection{Global Reasoning}
\label{sec:met-inf}
In general, to estimate the pose of a rigid object, a minimal set of three correspondences between 3D points on the object and in the 3D scene is required~\cite{Kabsch:1976}. The 3D points on the object, \ie in the object coordinate system, are predicted by the random forest. One possible strategy is to generate such triplets randomly by RANSAC~\cite{Fischler:1981}, as proposed in \cite{Brachmann:2014}. However, this approach has a serious drawback: the number of triples which must be generated by RANSAC in order to have at least a correct triple with the probability of 95\%, is very high. Assuming that $n$ out of $N$ pixels contain correct correspondences, the total number of samples is $\frac{\log(1-0.95)}{\log(1-(1-n/N)^3)}$. For $n/N=0.005$, which corresponds to a state-of-the-art local classifier, this constitutes $\sim24.000.000$ RANSAC iterations. Therefore, we address this problem with a different approach. Our goal is to assign to each pixel either one of the possible correspondence candidates, or an ``outlier'' label. We achieve this by formalizing a graphical model where each pixel is connected to every other pixel with a pairwise term. The pairwise term encodes a geometric check which is defined later. The optimization problem of this graphical model is discussed in Sec. \ref{sec:gm-pose}.
\subsection{Refinement and Hypothesis Scoring}
\label{sec:met-icp}
The output of the optimization of the graphical model is a collection of pose-consistent pixels where each of those pixels has a unique object coordinate. The collection is clustered into sets. In the example in Fig. \ref{fig:method-overview}(c) there are two sets (red, green). Each set provides one pose hypothesis. These pose hypotheses are refined and scored using our ICP-variant. In order to be robust to occlusion we only take the pose-consistent pixels within the ICP \cite{Besl:1992} for fitting the 3D model.

\section{Method - Graphical Model}
\label{sec:method-gm}
After a brief introduction to graphical models (Sec. \ref{sec:energyForm}), we define our graphical model used for object pose estimation (Sec. \ref{sec:gm-pose}). This is a fully-connected graph where each node has multiple labels, here 13. The globally optimal solution of this problem gives a pose-consistent (inlier) label to only those pixels that are part of the object, ideally. Since our potential functions are non-Gaussian the optimization problem is very challenging. We solve it, very efficiently, in a two stage procedure, with some additional guarantees. The first stage conservatively prunes those pixels that are likely not inliers. This is done with a sparsely connected graph and TRW-S \cite{Kolmogorov:Pami2006} as inference procedure (Sec. \ref{sec:stage-one}).  The second stage (Sec. \ref{sec:stage-two} - \ref{sec:met-candiates}) describes an efficient procedure for solving the problem with only the inlier candidates remaining.  We proove that by splitting this problem further into subproblems, in a proper way, a solution to one of these subproblems is guaranteed the optimal solution of the original problem.  

\subsection{Energy Minimization}
\label{sec:energyForm}
Let $G=(V,E)$ be an undirected graph with a finite {\em set of nodes} $V$ and {\em a set of edges} $E\in {V\choose 2}$. 
With each node $u\in V$  we associate a finite {\em set of labels} $L_u$. Let $\prod$ stand for the Cartesian product. The set $\BL=\prod_{u\in V}L_u$ is called {\em the set of labelings}. Its elements $l \in \BL$, called {\em labelings}, are vectors $l=(l_u\in L_u\colon {u\in V})$ with $|V|$ coordinates, where each one specifies a label assigned to the corresponding graph node. 
For each node {\em a unary cost function} $\theta_u\colon L_u\to\BR$ is defined. Its value $\theta_u(l_u)$, $l_u\in L_u$ specifies the cost to be paid for assigning label $l_u$ to node $u$. For each two neighboring nodes $\{u,v\}\in E$ {\em a pairwise cost function} $\theta_{uv}\colon L_u\times L_v \to \BR$ is defined. Its value $\theta_{uv}(l_u,l_v)$ specifies compatibility of labels $l_u$ and $l_v$ in the nodes $u$ and $v$, respectively. 
The triple $(G,\BL,\theta)$ defines {\em a graphical model}.

The {\em energy} $E_V(l)$ of a labeling $l\in \BL$ is a total sum of the corresponding unary and pairwise costs 
\begin{equation}
 E_V(l):= \sum_{u\in V}\theta_u(l_u) + \beta \sum_{uv\in E}\theta_{uv}(l_u,l_v)\,.
\end{equation}
Finding a labeling with the lowest energy value constitutes {\em an energy minimization problem}. Although this problem is NP-hard, in general, a number of efficient approximative solvers exist, see~\cite{kappes-2015-ijcv} for a recent review.

\subsection{Pose Estimation as Energy Minimization}
\label{sec:gm-pose}
Consider the following energy minimization problem:
\begin{itemize}
 \item The set of nodes is the set of pixels of the input image, i.e., each graph node corresponds to a pixel. To be precise, we scale down our image by a factor of two for faster processing, \ie each graph node corresponds to $2\times 2$ pixels.
 \item Number of labels in every node is the same. The label set $L_u:=\hat L_u\cup\{\textrm{o}\}$ consists of two parts, a subset $\hat L_u$ of correspondence proposals and a special label $\textrm{o}$. In total, each node is assigned $13$ labels: The forest $\mathcal{T}$ provides $3$ candidates for object coordinates in each pixel, $2\times 2$ pixels result in $12$ labels, and the last label is the ``outlier''.

Each label from the subset $\hat L_u$  corresponds a 3D coordinate on the object. Therefore, we will associate such labels $l_u$ with 3D vectors and assume vector operations to be well-defined for them. Unary costs $\theta_u(l_u)$ for these labels are 
 set to $(1-p_c(u))\alpha$, where $p_c(u)$ is defined in Section~\ref{sec:met-rf} and $\alpha$ is a hyper-parameter of our method. We will call the labels from $\hat L_u$ {\em inlier labels} or simply {\em inlier}. 

The special label $\textrm{o}$ denotes a situation in which the corresponding node does not belong to the object, or none of the labels in $\hat L_u$ predicts a correct object coordinate. We call $\textrm{o}$ the ``outlier label''. Unary costs for the outlier labels are: $\theta_u(\textrm{o})=\frac{\sum p_c(u)\alpha}{12}$, $u\in V$.

Let us define pose-consistent pixels. If a node, comprising of $2 \times 2$ pixels, is an inlier then the pixel with the respective label is defined as pose-consistent. The remaining three pixels are not pose-consistent and are ignored in the hypotheses selection stage. Also all pixels for which the node has an outlier label are not pose-consistent.

 \item Let $x_u$ and $x_v$ be $3D$ points in the camera coordinate system, corresponding to the nodes $u$ and $v$ in the scene. For any two inlier labels $l_u\in\hat L_u$ and $l_v\in \hat L_v$ we assign the pairwise costs as follows
{\small
\begin{equation}\label{eq:pairwise-costs}
\hspace{-15pt}\theta_{uv}(l_u,l_v)=
\begin{cases}
 \Big|\Vert l_u-l_v\Vert - \Vert x_u-x_v\Vert\Big|, & \Vert x_u-x_v\Vert \le D\\
 \infty, & \text{otherwise}.
\end{cases}
\end{equation}
}
That is, $\theta_{uv}(l_u,l_v)$ is equal to the absolute difference of distances between points $l_u, l_v$ on the object and $x_u, x_v$ in the scene (see Fig.~\ref{fig:method-binaries}) if the latter difference does not exceed the object size $D$.

Additionally, we define $\theta_{uv}(l_u,\textrm{o})=\theta_{uv}(\textrm{o},l_v)=\gamma$ for $l_u\in L_u$, $l_v\in L_v$. Here $\gamma$ is another hyper-parameter of our method. A sensible setting is $\gamma=0$, however, we will choose $\gamma>0$ in parts of the optimization (see details below). We also assign ${\theta_{uv}(\textrm{o},\textrm{o})=0}$, for all $\{u,v\}\in E$.
\item The graph $G$ is fully-connected, i.e., any two nodes $u, v\in V$ are connected by an edge $\{u,v\}\in E$.
\end{itemize}

\begin{figure}
\begin{center}
\includegraphics[scale=0.3]{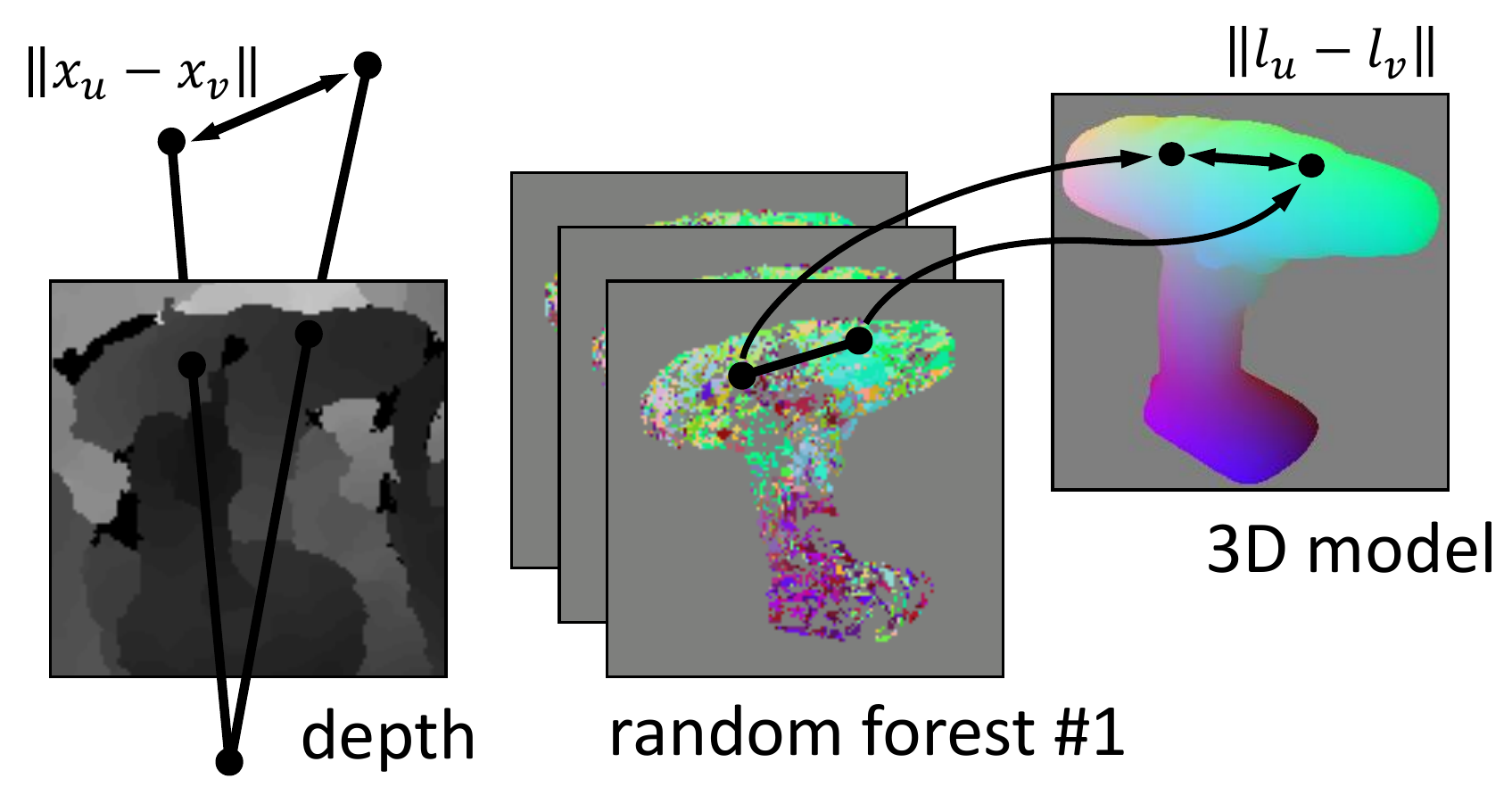}
\end{center}
   \caption{Visualization of our binary potential as defined in Eq. \ref{eq:pairwise-costs}.}
\label{fig:method-binaries}
\end{figure}

Given a labeling $l\in \BL$ we will speak about {\em inlier} and {\em outlier} nodes as those labeled with inlier or outlier labels, respectively.

The energy of any labeling is a sum of (i) the total unary costs for inlier labels, (ii) total geometrical penalty of the inlier labels, and (iii) total cost for the outlier labels.
A labeling with the minimal energy corresponds to a geometrically consistent subset of coordinate correspondences with a certain confidence for the local classifiers.
We believe, there are such hyper-parameter settings that these coordinates would provide approximately correct object poses.

\textbf{Why a fully-connected graph?} At the first glance, one could reasonably simplify the energy minimization problem described above by considering a sparse, \eg grid-structured graph. In this case the pairwise costs would control not all pairs of inlier labels, but only a subset of them, which may seem to be enough for a selection of inliers defining a good quality correspondence. Unfortunately, such a simplification has a serious drawback, nicely described in~\cite{Bergtholdt:2009}: As soon as the graph is not fully connected, it tends to select an optimal labeling, which contains separated ``islands'' of inlier nodes, connecting to other ``inlier-islands'' only via outlier nodes. Such a labeling may contain geometrically independent subsets of inlier labels, which may ``hallucinate'' the object in different places of the image. Moreover, from our experience many of such ``islands'' contain less than three nodes, which increases the probability for pairwise geometrical costs to be low just by chance.

Concerning energy minimization. Apart from the very special case with Gaussian potentials (like e.g.~\cite{koltun2011efficient}) even solving approximately an energy minimization problem on the fully-connected graph with $320\times 240$ nodes (which corresponds to the size of our discretized input image) is in general an infeasible task for modern methods. Therefore, we suggest here {\em a problem-specific}, but {\em very efficient} two-stage procedure for generating approximative solutions of the considered problem. In a first stage (Sec.~\ref{sec:stage-one}) we reduce the size of the optimization problem, in the second (Sec.~\ref{sec:stage-two}) we generate solution candidates.

\subsection{Stage One: Problem Size Reduction}
\label{sec:stage-one}
Despite what is discussed above about having a fully-connected graph, we used such a sparse graphical model to reduce the number of possible correspondence candidates. 
An optimal labeling of this sparse model provides us with a set of inlier nodes, which hopefully contain the true inliers. On the second stage of our optimization procedure, described below, we build several fully-connected graphs from these nodes. For the sparse graph we use the following neighborhood structure: we connect each node to 48 closest nodes excluding the closest 8. We believe that the distance measure between the closest nodes is very noisy.

%For the sparse graph we use the following neighborhood structure: we connect each node 48-wise while not connecting the closest 8 nodes.

We assign a positive value to the parameter $\gamma$ penalizing transitions between inlier and outlier labels. This decreases the number of ``inlier islands'' by increasing the cost of the transition. 
We approximately solved this sparse problem with the TRW-S algorithm~\cite{Kolmogorov:Pami2006}, which we run for $10$ iterations. We found the recent implementation~\cite{Shekhovtsov:2015} of this algorithm to be up to $8$ times faster than the original one~\cite{Kolmogorov:Pami2006} for our setting.

\begin{figure}
\begin{center}
\includegraphics[scale=0.45]{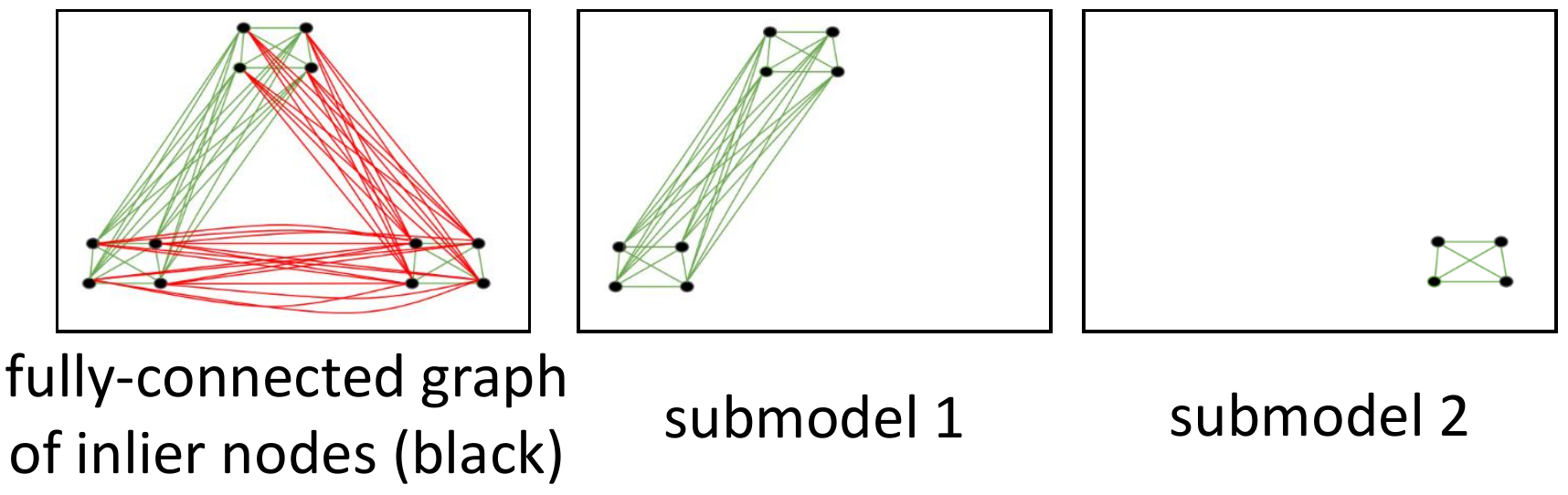}
\end{center}
\caption{ \textbf{Illustrating Optimization Stage Two.} (Left) the black pixels are all those pixels which were labeled as inliers, (potentially pose-consistent) in the first stage of the optimization. The first stage is opportunistic in the sense that wrong inliers may still be present. The goal of the second stage is to determine exactly the true inliers, from which we will determine the final pose. For this we have to solve the fully-connected graph shown, where each pixel has two labels, being an inlier $(1)$ or outlier $(0)$. Here the red links mark pairwise terms which contain $\infty$ values. Unfortunately, state of the art solvers struggle with this problem, due to the presence of red links. We solve this by solving two (in practice many more) submodels (middle, right) that contain no red links.  Each sub-problem produces a partial optimal solution $\{0,1,?\}$, where nodes that do not belong to the submodel are labeled $0$. We can now guarantee that one of the partial optimal solution is the partial optimal solution of the full graph shown on the left.}
\label{fig:method-2stage}
\end{figure}

\subsection{Stage Two: Generation of Solution Candidates}\label{sec:stage-two}
\noindent {\bf Fully-Connected Graphical Model.~}
As mentioned above, in the second stage we consider {\em a fully-connected} graphical model, where the node set contains only inlier nodes from the solution of the sparse problem. Moreover, to further reduce the problem size, we reduce the label set in each node to only two labels $L_u:=\{0,1\}$, where the label $0$ corresponds to an outlier and the label $1$ corresponds to the label associated with the node in the solution of the sparse problem. The unary and pairwise costs are assigned as before, but the hyper-parameters $\alpha$, $\beta$ and $\gamma$ are different. In particular $\gamma = 0$  since there is no reason to penalize transitions between inlier and outlier on this stage. Further, we will refer to $(G,\BL,\theta)$ defined above, as to master ({\bf fully-connected}) model $F$. 

Although such problems usually have a much smaller size (the solution of the sparse problem typically contains $20$ to $500$ inliers) our requirements to a potential solver are much higher at this stage. Whereas in the first stage we require only that the set of inlier nodes contains enough of correct correspondences, the inliers obtained on the second stage must be {\em all} correct (have small geometrical error). Incorrect correspondences may deteriorate the final pose estimation accuracy. Therefore the quality of the solution becomes critical on this stage. 
Although problems of this size are often feasible for exact solvers, obtaining an exact solution may take multiple minutes or even hours. Therefore, we stick to the methods delivering only {\em a part of an optimal solution (partial optimal labeling)}, but being able to do this in a fraction of seconds, or seconds, depending on the problem size. Indeed, it is sufficient to have only three inlier to estimate the object pose.

\noindent {\bf Partial Labeling.~}
Under a partial labeling we understand a vector $l\in \{0,1,?\}^{|V|}$ with only a subset $V'\subset V$ of coordinates assigned a value $0$ or $1$. The rest of coordinates take a special value $?$ = ``unlabeled''. Partial labeling is called {\em partial optimal labeling}, if there exists {\em an optimal} labeling $l^*\in \BL$ such that $l^*_u=l_u$ for all $u\in V'$.

There are a number of efficient approaches addressing partial optimality (obtaining partial optimal labelings) for discrete graphical models for both multiple~\cite{Swoboda:2014,Shekhovtsov:2015} and two-label cases~\cite{kolmogorov2007minimizing,wang2016relaxation}. We refer to~\cite{shekhovtsov-14} for an extensive overview. For problems with two labels the standard partial optimality method is QPBO~\cite{kolmogorov2007minimizing}, which we used in our experiments. 

All partial optimality methods are based on sufficient optimality conditions, which have to be fulfilled for a partially optimal labeling. However, as it directly follows from~\cite[Prop.1]{Swoboda2016}, these conditions can hardly be fulfilled for label $l_u$ in a node $u$, if for some neighboring node $v\colon \{u,v\}\in E$ the difference between the smallest pairwise potential ``attached'' to the label $l_u$, $\min_{l_v\in L_v}\theta_{uv}(l_u,l_v)$ and the largest one  $\max_{l_v\in L_v}\theta_{uv}(l_u,l_v)$ is very large. In our setting this is the case, \eg , if for two nodes $u$ and $v$ (connected by an edge as any pair in a fully-connected graph) it holds $\Vert x_u-x_v\Vert > D$, see~\eqref{eq:pairwise-costs}. Existence of such infinite costs leads to deterioration of the QPBO results: in many cases the returned partial labeling contains less than $3$ labeled nodes, which is not sufficient for pose estimation. 

To deal with this issue, we propose a novel method to find {\em multiple} partial labelings: We consider a set of induced submodels (see Definition~\ref{def:induced-submodel} below) and find a partial optimal solution for each of them. We guarantee, however, that {\em at least} one of these partial labelings is a partial optimal one {\em for the whole graphical model} and not only for its submodel.
Considering submodels allows to significantly reduce the number of node pairs $\{u,v\}$ with $\theta_{uv}(1,1)=\infty$. In its turn, it leads to many more nodes being marked as partially optimal by QPBO and therefore, provides a basis for a high quality pose reconstruction (see Fig. \ref{fig:method-2stage}).

The theoretical background for the method is provided in the following subsection.

\subsection{On Optimality of Subproblem Solutions for Binary Energy Minimization}
\label{sec:met-opt}

Let $G=(V,E)$ be a graph and $V'\subset V$ be a subset of its nodes. A subgraph $G'=(V',E')$ is called {\em induced} w.r.t.~$V'$, if $E'=\{\{u,v\}\in E\colon u,v\in V'\}$ contains all edges of $E$ connecting nodes within $V'$. 
\begin{definition}\label{def:induced-submodel}
 Let $M=(G,\BL,\theta)$ be a graphical model with $G=(V,E)$ and $\BL=\prod_{u\in V} L_u$. A graphical model $M'=(G',\BL',\theta')$ is called {\em induced} w.r.t.\ $V'\subseteq V$ if \\
\indent $\bullet$  $G'$ is an induced subgraph of $G$ w.r.t. $V'$. \\
\indent $\bullet$ $\BL'=\prod_{u\in V'}L_u$. \\
\indent $\bullet$ $\theta'_u=\theta_u$ for $u\in V'$ and $\theta'_{uv}=\theta_{uv}$ for $\{u,v\}\in E'$. 
\end{definition}

\begin{proposition}\label{prop:proof-of-submodel}
 Let $M=(G,\BL,\theta)$ be a graphical model, with $G=(V,E)$, $\BL=\{0,1\}^{|V|}$ and $\theta$ such that 
 \begin{equation}\label{eq:zero-pairwise-potentials}
\hspace{-8pt}  \theta_{uv}(0,1)=\theta_{uv}(1,0)=\theta_{uv}(0,0)=0\quad \forall \{u,v\}\in E\,.
 \end{equation}
 Let $\hat l\in \BL$ be an energy minimizer of $M$ and ${\hat V:=\{u\in V\colon \hat l_u=1\}}$.\\
 Let $M'=(G',\BL',\theta')$ be an induced model w.r.t. some $V'\supseteq \hat V$ and $l'$ be an energy minimizer of $M'$.
 Then there exists a minimizer $l^*$ of energy of $M$, such that $l'_u=l^*_u$ for all $u\in V'$.
\end{proposition}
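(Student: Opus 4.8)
The plan is to reduce the binary energy minimization to the minimization of a set function and then exploit the special structure~\eqref{eq:zero-pairwise-potentials}. For a labeling $l\in\{0,1\}^{|V|}$ write $S(l):=\{u\in V\colon l_u=1\}$ for its support. Because $\theta_{uv}(0,0)=\theta_{uv}(0,1)=\theta_{uv}(1,0)=0$, the only pairwise terms that contribute to the energy are those on edges whose \emph{both} endpoints carry the label $1$. Hence the energy depends on $l$ only through $S(l)$, and I would introduce the set function
\begin{equation*}
\Phi(S):=\sum_{u\in S}\theta_u(1)+\sum_{u\in V\setminus S}\theta_u(0)+\beta\!\!\!\sum_{\substack{\{u,v\}\in E\\ u,v\in S}}\!\!\!\theta_{uv}(1,1),
\end{equation*}
so that $E_V(l)=\Phi(S(l))$ and minimizing the energy of $M$ is the same as minimizing $\Phi$ over all $S\subseteq V$. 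By hypothesis $\hat V=S(\hat l)$ is a global minimizer of $\Phi$. Applying the identical construction to the induced model $M'$ gives a set function $\Phi'$ on subsets of $V'$ with $\Phi'(S(l'))$ equal to the energy of $l'$ in $M'$; writing $S':=S(l')\subseteq V'$, this $S'$ is a minimizer of $\Phi'$.

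The second step is to compare $\Phi$ and $\Phi'$ on subsets $S\subseteq V'$. Since $G'$ is an \emph{induced} subgraph w.r.t.\ $V'$, for every $S\subseteq V'$ the edges of $E$ with both endpoints in $S$ coincide with those of $E'$, so the pairwise sums in $\Phi(S)$ and $\Phi'(S)$ agree. The unary sums differ only on the nodes outside $V'$: using $S\subseteq V'$ one checks $(V\setminus S)\setminus(V'\setminus S)=V\setminus V'$, and these nodes carry label $0$ in both problems. Hence
\begin{equation*}
\Phi(S)=\Phi'(S)+C,\qquad C:=\sum_{u\in V\setminus V'}\theta_u(0),\qquad S\subseteq V',
\end{equation*}
where $C$ is independent of $S$. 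Consequently $\Phi$ and $\Phi'$ have the same minimizers among subsets of $V'$, so $S'$ minimizes $\Phi$ over all $S\subseteq V'$.

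The final step is a sandwich argument that uses the hypothesis $V'\supseteq\hat V$. Since $\hat V\subseteq V'$, the set $\hat V$ is itself feasible in the minimization of $\Phi$ over subsets of $V'$, whence $\Phi(S')\le\Phi(\hat V)$. But $\hat V$ is a global minimizer of $\Phi$, so $\Phi(\hat V)\le\Phi(S')$; combining the two yields $\Phi(S')=\Phi(\hat V)=\min_{S\subseteq V}\Phi(S)$, i.e.\ $S'$ is a global minimizer of $\Phi$ over all of $V$. I would then take $l^*$ to be the labeling with support $S'$ (label $1$ on $S'$, label $0$ elsewhere). It is a global energy minimizer of $M$, and since $S'\subseteq V'$ we have $l^*_u=l'_u$ for every $u\in V'$, which is exactly the claim.

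The conceptual content lies entirely in the first two steps; once the problem is rephrased as set-function minimization, the special form~\eqref{eq:zero-pairwise-potentials} makes the constant-shift identity $\Phi=\Phi'+C$ transparent and the conclusion follows from a two-line inequality. The step I would watch most carefully is verifying that the induced-subgraph definition truly makes the pairwise sums agree on \emph{every} $S\subseteq V'$ (no edge of $E$ inside $V'$ is silently dropped or added) and that the discarded unary terms form a genuine constant; this is precisely where the assumptions ``$G'$ induced'' and ``$V'\supseteq\hat V$'' enter, the latter being indispensable for the sandwich and hence the only place the relation between $V'$ and $\hat V$ is used.
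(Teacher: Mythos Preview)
Your proof is correct and follows essentially the same approach as the paper: both exploit~\eqref{eq:zero-pairwise-potentials} to decompose the energy into a contribution on $V'$ and a constant contribution on $V\setminus V'$, then use the optimality of $l'$ on $V'$ together with $\hat V\subseteq V'$ to sandwich $\Phi(S')$ (your notation) against the global optimum. Your set-function reformulation is just a convenient repackaging of the paper's one-line energy decomposition $E_V(\hat l)=E_{V'}(\hat l_{V'})+E_{V\setminus V'}(\overline 0)\ge E_{V'}(l')+E_{V\setminus V'}(\overline 0)$; the logical content is identical.
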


\begin{proof}
%\begin{equation}\label{eq:proof-of-submodel}
% E_{V}(\hat l) = E_{V'}(\hat x_{V'}) + E_{V\backslash {V'}}(\hat x_{V\backslash {V'}}) \ge E(l') + E_{V\backslash {V'}}(\overline{0})
%\end{equation}
$E_{V}(\hat l) = E_{V'}(\hat x_{V'}) + E_{V\backslash {V'}}(\hat x_{V\backslash {V'}}) \ge E(l') + E_{V\backslash {V'}}(\overline{0})$.
Since $x_{V\backslash {V'}}=\overline 0$ due to~\eqref{eq:zero-pairwise-potentials}, the equality holds.
The inequality holds by definition of $l'$. Let us consider the labeling $l^*:=(l',\overline{0})$ constructed by concatenation of $l'$ on $V'$ and $\overline 0$ on $V\backslash V'$. Its energy is equal to the right-hand-side of the expression, due to~\eqref{eq:zero-pairwise-potentials}. Since $\hat l$ is an optimal labeling, the inequality holds as equality and the labeling $l^*$ is optimal as well. It finalizes the proof.
\end{proof}

\begin{corollary}
 Let under condition of Proposition~\ref{prop:proof-of-submodel} $l'$ be a partial optimal labeling for $M'$. Then it is partial optimal for $M$.
\end{corollary}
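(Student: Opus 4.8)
The plan is to derive the corollary directly from Proposition~\ref{prop:proof-of-submodel} by unfolding the definition of partial optimality and composing it with the statement already proved. The only care needed is to keep the two roles of a ``solution of $M'$'' separate: in the proposition $l'$ denotes a \emph{full} energy minimizer of the submodel, whereas in the corollary $l'$ is a \emph{partial} optimal labeling, i.e.\ an element of $\{0,1,?\}^{|V'|}$ labeled only on some subset $W\subseteq V'$. To avoid the clash I would introduce a fresh name for the witnessing full labeling.

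First I would unfold the definition of partial optimality for $M'$. By assumption there exists a full labeling $\bar l\in\{0,1\}^{|V'|}$ that minimizes the energy of $M'$ and satisfies $\bar l_u=l'_u$ for every labeled node $u\in W$. This is immediate from the definition of partial optimal labeling quoted before the corollary, applied to the submodel $M'$ in place of the master model.

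Next I would feed $\bar l$ into Proposition~\ref{prop:proof-of-submodel}. Since $\bar l$ is an energy minimizer of $M'$ and $M'$ is induced w.r.t.\ $V'\supseteq\hat V$, the proposition yields a global minimizer $l^*$ of $M$ with $l^*_u=\bar l_u$ for all $u\in V'$. Chaining the two agreements on $W\subseteq V'$ gives $l^*_u=\bar l_u=l'_u$ for every $u\in W$. Thus $l^*$ is an optimal labeling of $M$ coinciding with $l'$ on its labeled set, which is exactly the statement that $l'$ is partial optimal for $M$.

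I expect no genuine obstacle here: the entire content is carried by Proposition~\ref{prop:proof-of-submodel}, and the corollary is essentially a transitivity argument (an optimum of $M'$ lifts to an optimum of $M$, and partial optimality of $l'$ is witnessed by some such optimum). The one place to be vigilant is the implicit convention that a partial labeling of $M'$ is read as a partial labeling of $M$ by leaving every node of $V\setminus V'$ unlabeled; once that convention is made explicit, the labeled set $W$ is unchanged and the chain of equalities above closes the proof.
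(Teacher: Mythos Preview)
Your argument is correct and is precisely the intended derivation: the paper states the corollary without proof, and the only content is the transitivity step you spell out---take a full minimizer $\bar l$ of $M'$ witnessing partial optimality of $l'$, lift it to a minimizer $l^*$ of $M$ via Proposition~\ref{prop:proof-of-submodel}, and observe that $l^*$ then also witnesses $l'$. Your care in distinguishing the partial labeling $l'$ from its witnessing full minimizer $\bar l$, and in making explicit the convention that nodes in $V\setminus V'$ are unlabeled, is appropriate and resolves the only possible ambiguity.
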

Note, since pairwise costs of {\em any} two-label (pairwise) graphical model can be easily transformed to the form~\eqref{eq:pairwise-costs}, see e.g.~\cite{kolmogorov2007minimizing}, Proposition~\ref{prop:proof-of-submodel} is generally applicable to all such models.

\subsection{Obtaining Candidates for Partial Optimal Labeling}
\label{sec:met-candiates}
To be able to use Proposition~\ref{prop:proof-of-submodel} we need a way to characterize possible optimal labelings for the master model $F$ (defined in Section~\ref{sec:stage-two}) to be able to generate possible sets $V'$ containing all inlier nodes of an optimal labeling. Indeed, this characterization is provided by the following proposition: 
\begin{proposition}
Let $l^*$ be an optimal solution to the fully-connected problem described above. Then for any two inlier nodes $u$ and $v$, $l^*_u=l^*_v=1$, it holds $\Vert x_u-x_v\Vert \le D$ or, in other words, $\theta_{uv}(l^*_u,l^*_v) < \infty$. 
\end{proposition}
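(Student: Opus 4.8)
The plan is to argue by contradiction, exploiting the fact that an energy minimizer must have \emph{finite} energy. First I would exhibit a reference labeling whose energy is finite, namely the all-outlier labeling $\overline{0}$ in which every node is assigned label $0$. Its energy is
\[
E_V(\overline{0}) = \sum_{u\in V}\theta_u(0) + \beta\sum_{\{u,v\}\in E}\theta_{uv}(0,0).
\]
By the construction of the master model $F$ in Section~\ref{sec:stage-two}, each unary outlier cost $\theta_u(0)$ is finite and every pairwise term satisfies $\theta_{uv}(0,0)=0$, so $E_V(\overline{0})<\infty$. Consequently any optimal solution $l^*$ obeys $E_V(l^*)\le E_V(\overline{0})<\infty$.

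Next I would suppose, for contradiction, that there are two inlier nodes $u,v$ with $l^*_u=l^*_v=1$ but $\Vert x_u-x_v\Vert > D$. Because $F$ is fully connected, the pair $\{u,v\}$ is an edge of $E$, so the summand $\beta\,\theta_{uv}(l^*_u,l^*_v)=\beta\,\theta_{uv}(1,1)$ is present in $E_V(l^*)$. The definition of the pairwise cost in Eq.~\eqref{eq:pairwise-costs} forces $\theta_{uv}(1,1)=\infty$ whenever $\Vert x_u-x_v\Vert > D$. Since $\beta>0$, and since all remaining contributions to $E_V(l^*)$ are bounded below (the unary costs are finite and every finite-valued pairwise cost is a nonnegative absolute difference), this single infinite summand forces $E_V(l^*)=\infty$, contradicting the finiteness established above.

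Therefore no such pair can exist, and every pair of inlier nodes in an optimal solution must satisfy $\Vert x_u-x_v\Vert\le D$, equivalently $\theta_{uv}(l^*_u,l^*_v)<\infty$, which is exactly the claim. The only point that genuinely needs verification is that the reference labeling has finite energy, and this rests entirely on $\theta_{uv}(0,0)=0$ together with the finiteness of the outlier unary costs, both guaranteed by the model construction; once that is secured the argument is immediate, so I do not anticipate any real obstacle beyond this bookkeeping.
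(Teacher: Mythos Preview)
Your argument is correct and is exactly the approach the paper takes: exhibit the all-outlier labeling as a finite-energy witness, then observe that any labeling containing two inliers at distance greater than $D$ would have infinite energy and hence cannot be optimal. The paper states this in one sentence as a ``trivial proof''; your write-up simply fills in the bookkeeping.
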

This proposition has a trivial proof: as soon as there is a labeling with a finite energy (e.g. $l_u=0$ for all $u\in V$), an optimal labeling can not have an infinite one.

An implication of the proposition is quite clear from the applied point of view: all inlier nodes must be placed within a circle with a diameter equal to the maximal linear size of the object. Combining this observation with Proposition~\ref{prop:proof-of-submodel}, we will generate a set of submodels, which contain all possible subsets of nodes satisfying the above condition. 

A simple, yet inefficient way to generate all such submodels, is to go over all nodes $u$ of the graph $G$ and construct a subproblem $M_u$ induced by nodes, which are placed at most at the distance $D$ of $u$. A disadvantage of this method is that one gets as many as $|V|$ subproblems, which leads to the increased runtime and too many almost equal submodels. Instead, we consider all connected inlier components obtained on the first stage as a result of the problem reduction. We remove all components with the size less than $3$, because, as we found experimentally, they mostly represent only noise.  We enumerate all components, i.e., assign a serial number to each. For each component $f$ we build a fully-connected submodel, which includes itself and all components with bigger serial number within the distance $D$ from all nodes of $f$. Such an approach usually leads to at most $20$ submodels and most of them get more than $3$ partial optimal labels by QPBO.

\section{Experiments}
\begin{figure*}[h!t!]
\centering
\begin{minipage}[b]{0.28\textwidth}
	\includegraphics[width=\linewidth]{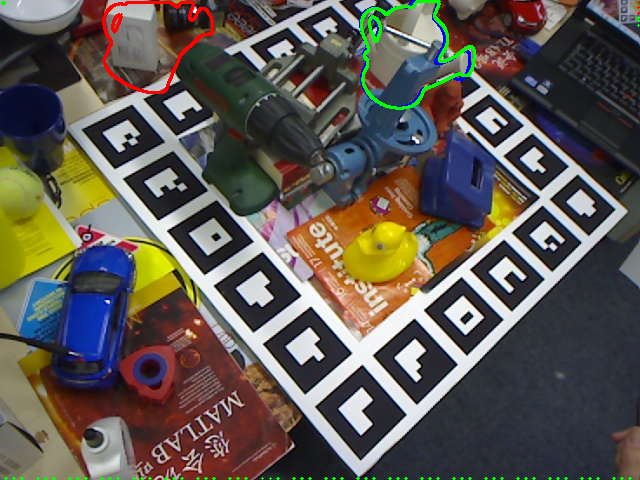}
\end{minipage}
\begin{minipage}[b]{0.28\textwidth}
	\includegraphics[width=\linewidth]{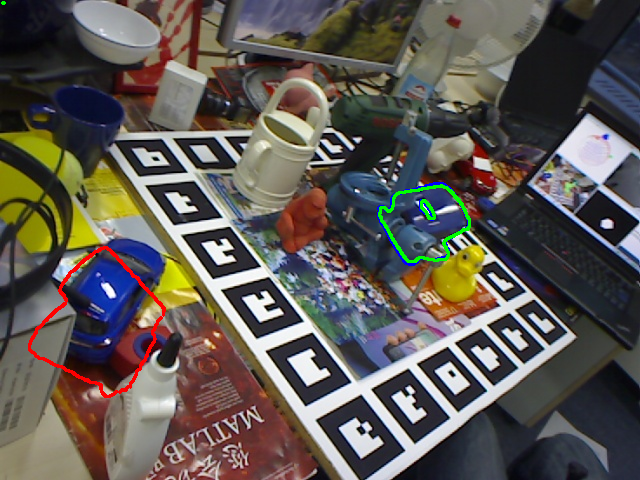}
\end{minipage}
\begin{minipage}[b]{0.28\textwidth}
	\includegraphics[width=\linewidth]{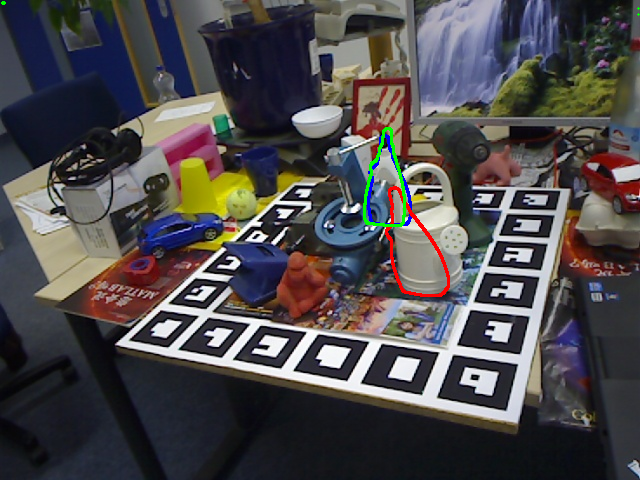}
\end{minipage}
\caption{Qualitative results of our method on the ``Occluded Object Dataset''~\cite{Brachmann:2014}. Results of our method are depicted as green silhouettes, the ground truth pose is shown as a blue silhouette and results of the method by Krull \etal \cite{Krull:2015} are shown as red silhouettes. Note, since these results shows correct poses of our method the green silhouette is on top of the blue one.}
\label{fig:results}
\end{figure*}

\begin{figure}[h]
\begin{center}
\includegraphics[scale=0.5]{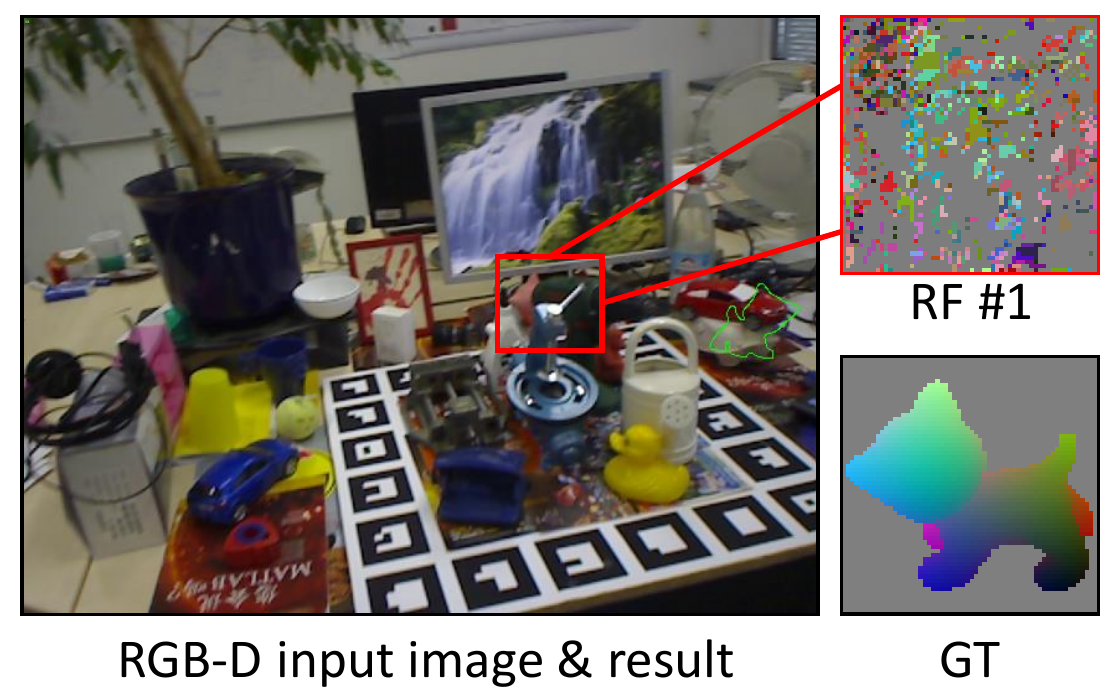}
\end{center}
   \caption{\textbf{Failure case.} We use the random forest from \cite{Brachmann:2014} which were trained on image patches of non-occluded objects. Hence they can only handle a moderate level of occlusion. In case of strong occlusion they fail to predict good object coordinates. In the illustrated example, a wrong pose is predicted (green silhouette) and the object coordinates are also wrong (see zoom). In future work, this problem can be mitigated for instance by training on image patches that contain occlusions.}
\label{fig:results-fail}
\end{figure}

We evaluated our method on a publicly available dataset. We will first introduce the dataset and then the evaluation protocol (Sec.~\ref{sec:dataset}). After that, we quantitatively compare our work with three competitors, and also present qualitative results (Sec.~\ref{sec:results}).
\begin{table*}[]
\centering
\label{table-results}
\begin{tabular}{c|cccc}
Method       & \begin{tabular}[c]{@{}c@{}}Our method\end{tabular} & \begin{tabular}[c]{@{}c@{}}Hinterstoisser et al.\cite{Hinterstoisser:2016}\end{tabular} & \begin{tabular}[c]{@{}c@{}}Krull et al.\cite{Krull:2015}\end{tabular} & \begin{tabular}[c]{@{}c@{}}Brachmann et al.\cite{Brachmann:2014}\end{tabular} \\ \hline
Object       & \multicolumn{4}{c}{Scores}                                                                                                                                                                                                                   \\ \hline
Ape          & 80.7\%                                               & \textbf{81.4\%}                                                 & 68.0\%                                                 & 53.1\%                                                     \\
Can          & 88.5\%                                               & \textbf{94.7\%}                                                 & 87.9\%                                                 & 79.9\%                                                     \\
Cat          & \textbf{57.8\%}                                      & 55.2\%                                                          & 50.6\%                                                 & 28.2\%                                                     \\
Driller      & \textbf{94.7\%}                                      & 86.0\%                                                          & 91.2\%                                                 & 82.\%                                                      \\
Duck         & 74.4\%                                               & \textbf{79.7\%}                                                 & 64.7\%                                                 & 64.3\%                                                     \\
Eggbox       & 47.6\%                                               & \textbf{65.5\%*}                                                & 41.5\%                                                 & 9.0\%                                                      \\
Glue         & \textbf{73.8\%}                                      & 52.1\%                                                          & 65.3\%                                                 & 44.5\%                                                     \\
Hole Puncher & \textbf{96.3\%}                                      & 95.5\%                                                          & 92.9\%                                                 & 91.6\%                                                     \\ \hline
Average      & \textbf{76.7\%}                                               & 76.2\%                                                          & 70.3\%                                                 & 56.6\%                                                    
\end{tabular}
\caption{Quantitative comparision of \cite{Brachmann:2014}, \cite{Krull:2015}, \cite{Hinterstoisser:2016} and our approach for all objects in the challenging ``Occluded Object Dataset''. *The number for the Eggbox differs from \cite{Hinterstoisser:2016} since they did not consider all images of the sequence (private e-mail exchange with the authors).}
\end{table*}

\subsection{Dataset}
\label{sec:dataset}
To evaluate our method, we use the publicly available dataset of Brachmann \etal \cite{Brachmann:2014}, known as ``Occluded Object Dataset''\footnote{http://cvlab-dresden.de/iccv2015-occlusion-challenge/}. 
This dataset was presented in \cite{Brachmann:2014} and is an extension of \cite{Hinterstoisser:2012}. They annotated the ground truth pose for 8 objects in 1214 images with various degrees of object occlusions.

To evaluate our method we use the criteria from \cite{Hinterstoisser:2012}. This means we measure the percentage of correctly estimated poses for each object. To determine the quality of an estimated pose 
we calculate the average distance of each point with respect to the estimated pose and the ground truth pose. The pose is accepted if the average distance is below 10\% of the object diameter.

To find good parameters for our graphical model we created a validation set, which we will make publicly available. For this we annotated an additional image sequence (1235 images) of \cite{Hinterstoisser:2012} containing 6 objects. The final set of parameters for stage one is $\alpha = 0.21, \beta = 23.1, \gamma = 0.0048$ and stage two is $\alpha = 0.2, \beta = 2.0, \gamma = 0.0$.

\subsection{Results}
\label{sec:results}
In the following we compare to the methods of Brachmann \etal \cite{Brachmann:2014}, Krull \etal \cite{Krull:2015} and to the recently published state-of-the-art method of Hinterstoisser \etal \cite{Hinterstoisser:2016}.
Results are shown in Table \ref{table-results}. We achieve an average accuracy of $76.7\%$ over all objects, which is $0.4\%$ better than the current state-of-the-art method of Hinterstoisser \etal \cite{Hinterstoisser:2016}. With respect to individual objects our method performs best on four objects and \cite{Hinterstoisser:2016} on the other four. In comparison with \cite{Brachmann:2014} and \cite{Krull:2015} we achieve an improvement of $20.1\%$ and $6.4\%$ respectively. Since these two methods use the same random forest, as we do, the benefits of using global reasoning can be seen. See Fig. \ref{fig:results} for qualitative results.

\section{Conclusion and Future Work}
In this work we have focused on the pose-hypothesis generation step, which is part of many pipelines for 6D object pose estimation. For this, we introduced a novel, global geometry check in form of a fully connected CRF. Since this direct optimization on the CRF is hardly feasible, we present an efficient two-step optimization procedure, with some guarantees on optimality. There are many avenues for future work. An obvious next step is to improve on the regression procedure for object coordinates, \eg by replacing the random forests with a convolutional neural network.  

\subsubsection*{Acknowledgements.} This work was supported by: European Research Council (ERC) under the European Union's Horizon 2020 research and innovation programme (grant agreement No 647769); German Federal Ministry of Education and Research (BMBF, 01IS14014A-D); EPSRC EP/I001107/2; ERC grant ERC- 2012-AdG 321162-HELIOS. The computations were performed on an HPC Cluster at the Center for Information Services and High Performance Computing (ZIH) at TU Dresden.
%------------------------------------------------------------------------

%------------------------------------------------------------------------

{\small
\bibliographystyle{ieee}
\bibliography{mybib}
}

\end{document}